\documentclass[12pt]{article}%
\usepackage{amssymb}
\usepackage{amsfonts}
\usepackage{graphicx, epsfig}
\usepackage{amsmath}
\usepackage[hidelinks]{hyperref}
\usepackage{graphicx}
\usepackage{color}
\usepackage{rotating}
\usepackage[authoryear]{natbib}
\usepackage[left=1.3cm, right=1.3cm]{geometry}
\usepackage{hyperref}%
\usepackage{authblk}
\usepackage{placeins}
\usepackage{multirow}

\newtheorem{proposition}{Proposition}

\newenvironment{proof}[1][Proof]{\textbf{#1.} }{\ \rule{0.5em}{0.5em}}

\begin{document}

\title{KL Divergence Between Gaussians: A Step-by-Step Derivation for the Variational Autoencoder Objective}

\author{
Andrés Muñoz \and Rodrigo Ramele \\
Instituto Tecnológico de Buenos Aires
}

\date{}
\maketitle
\begin{abstract}
Kullback-Leibler (KL) divergence is a fundamental concept in information theory that quantifies the discrepancy between two probability distributions. In the context of Variational Autoencoders (VAEs), it serves as a central regularization term, imposing structure on the latent space and thereby enabling the model to exhibit generative capabilities.
In this work, we present a detailed derivation of the closed-form expression for the KL divergence between Gaussian distributions—a case of particular importance in practical VAE implementations. Starting from the general definition for continuous random variables, we derive the expression for the univariate case and extend it to the multivariate setting under the assumption of diagonal covariance. Finally, we discuss the interpretation of each term in the resulting expression and its impact on the training dynamics of the model.
\end{abstract}

\smallskip
\noindent\textbf{Keywords:} Kullback--Leibler divergence, multivariate Gaussian, closed-form solution, Variational Autoencoders (VAE).

\section{Introduction}
\label{sec:intro}

Probabilistic modeling has become a cornerstone of modern machine learning, particularly in the development of generative models capable of capturing complex data distributions. Among these, Variational Autoencoders (VAEs), introduced by   \cite{kingma2013auto} have emerged as a powerful framework that integrates principles from deep learning and variational inference to learn meaningful latent representations. 

A key component of the VAE formulation is the incorporation of a structured latent space, typically enforced by assuming a prior distribution over the latent variables. In practice, this prior is commonly chosen as a standard normal distribution. To ensure that the learned representations adhere to this prior, VAEs incorporate a regularization term based on the Kullback--Leibler (KL) divergence, which measures the discrepancy between the approximate posterior distribution produced by the encoder and the prior distribution.

Despite its widespread use, the KL divergence is often introduced in an abstract manner, and its closed-form expression for Gaussian distributions is frequently presented without a detailed derivation. However, understanding this derivation is essential for gaining deeper insight into the behavior of VAEs, particularly regarding how the latent space is shaped during training.

In this work, we provide a step-by-step derivation of the KL divergence between Gaussian distributions, starting from its general definition for continuous random variables. We first consider the univariate case and then extend the result to the multivariate setting under the assumption of diagonal covariance. Finally, we discuss the interpretation of the resulting expression and its implications for the training dynamics and regularization properties of VAEs.

\section{Kullback--Leibler Divergence}

The Kullback-Leibler (KL) divergence, originally introduced in \cite{kullback1951information}, is a fundamental measure in information theory that quantifies the discrepancy between two probability distributions. Let $P$ and $Q$ be two probability distributions defined over the same space $\mathcal{X}$, with corresponding densities $p$ and $q$.   The KL divergence of $P$ with respect to $Q$ is defined as: 

\begin{align}\label{def_KL} 
D_{KL}(P \| Q) = \int_{\mathcal{X}} p(x)\, \log \left( \frac{p(x)}{q(x)} \right)\, dx
\end{align}

\noindent provided that $p(x) = 0$ whenever $q(x) = 0$, ensuring that the expression is well-defined.

Equivalently, the KL divergence can be expressed as an expectation with respect to $P$:

\begin{align*}
D_{KL}(P \| Q) = \mathbb{E}_{ P} \left[ \log \frac{p(x)}{q(x)} \right]
\end{align*}

The KL divergence satisfies the following properties:

\begin{itemize}
\item $D_{KL}(P \| Q) \geq 0$ \quad (non-negativity)
\item $D_{KL}(P \| Q) = 0$ if and only if $P = Q$ (almost everywhere)
\item $D_{KL}(P \| Q) \neq D_{KL}(Q \| P)$ in general (asymmetry)
\end{itemize}

Despite being commonly referred to as a “distance”, the KL divergence is not a true metric, as it does not satisfy symmetry nor the triangle inequality. Instead, it can be interpreted as the expected information loss incurred when using $Q$ to approximate $P$.

\section{Derivation for Gaussian Distributions}\label{sec:propuesta}

Let $P$ and $Q$ be two probability distributions such that
\[
P \sim \mathcal{N}(\mu_1, \Sigma_1), 
\qquad 
Q \sim \mathcal{N}(\mu_2, \Sigma_2),
\]
with $\mu_1, \mu_2 \in \mathbb{R}^k$ and $\Sigma_1, \Sigma_2 \in \mathbb{R}^{k \times k}$ and let $p$ and $q$ denote their corresponding probability density functions 

\begin{equation}\label{dens_norm}
\begin{aligned}
p(x) &= \frac{1}{\sqrt{(2\pi)^k |\Sigma_1|}} 
\exp\!\left( -\frac{1}{2}(x-\mu_1)^\top \Sigma_1^{-1} (x-\mu_1) \right) \\[6pt]
q(x) &= \frac{1}{\sqrt{(2\pi)^k |\Sigma_2|}} 
\exp\!\left( -\frac{1}{2}(x-\mu_2)^\top \Sigma_2^{-1} (x-\mu_2) \right)
\end{aligned} 
\end{equation}  By a slight abuse of notation, we will use $p$ and $q$ interchangeably to refer to the corresponding distributions when no ambiguity arises.

In the following, we assume that $\mathcal{X} = \mathbb{R}^k$. For notational simplicity, the domain of integration will be omitted. Starting from Definition (\ref{def_KL})  and applying basic properties of the logarithm, the Kullback - Leibler divergence can be rewritten as:
\begin{align}\label{KL_log}
D_{KL}(P \| Q)= \int p(x)\bigl(\log p(x) - \log q(x)\bigr)\, dx
\end{align}

The logarithms of each density  (\ref{dens_norm})  are:

\begin{align*}
\log p(x) &= -\frac{k}{2}\log(2\pi) - \frac{1}{2}\log|\Sigma_1| 
             - \frac{1}{2}(x-\mu_1)^\top \Sigma_1^{-1}(x-\mu_1) \\[4pt]
\log q(x) &= -\frac{k}{2}\log(2\pi) - \frac{1}{2}\log|\Sigma_2| 
             - \frac{1}{2}(x-\mu_2)^\top \Sigma_2^{-1}(x-\mu_2)
\end{align*}

Substituting into (\ref{KL_log}) the KL divergence (noting that the constant terms cancel), we obtain:

\begin{align*}
D_{KL}(P \| Q)
  = \int p(x) \Biggl[
      \frac{1}{2}\log\frac{|\Sigma_2|}{|\Sigma_1|}
      + \frac{1}{2}(x-\mu_2)^\top \Sigma_2^{-1}(x-\mu_2)
      - \frac{1}{2}(x-\mu_1)^\top \Sigma_1^{-1}(x-\mu_1)
    \Biggr] dx
\end{align*}

Using the linearity of the integral, we decompose the expression into three terms, which we denote by $H_1$, $H_2$, and $H_3$.

\begin{align}
D_{KL}(P \| Q)
&= \int p(x)\,\frac{1}{2}\log\frac{|\Sigma_2|}{|\Sigma_1|}\, dx  \nonumber \\ 
&\quad + \int p(x)\,\frac{1}{2}(x-\mu_2)^\top \Sigma_2^{-1}(x-\mu_2)\, dx  \nonumber \\
&\quad - \int p(x)\,\frac{1}{2}(x-\mu_1)^\top \Sigma_1^{-1}(x-\mu_1)\, dx  \nonumber \\[6pt]
&= H_1 + H_2 - H_3   \label{KL}
\end{align}

We now compute each term separately.

\paragraph{Term $H_1$.}

Since the logarithmic term is constant with respect to $x$, and recalling that
\[
\int p(x)\,dx = 1,
\]
we obtain:

\[
H_1 = \frac{1}{2}\log\frac{|\Sigma_2|}{|\Sigma_1|}.
\]

\paragraph{Term $H_2$.}

We consider the term
\[
H_2 = \mathbb{E}_P\!\left[(x-\mu_2)^\top \Sigma_2^{-1}(x-\mu_2)\right].
\]

Rewriting the difference as
\[
x - \mu_2 = (x - \mu_1) + (\mu_1 - \mu_2),
\]
we expand the quadratic form:

\begin{align*}
(x-\mu_2)^\top \Sigma_2^{-1}(x-\mu_2)
&= \big[(x-\mu_1) + (\mu_1 - \mu_2)\big]^\top \Sigma_2^{-1} \big[(x-\mu_1) + (\mu_1 - \mu_2)\big] \\
&= (x-\mu_1)^\top \Sigma_2^{-1}(x-\mu_1) \\
&\quad + 2(\mu_1 - \mu_2)^\top \Sigma_2^{-1}(x-\mu_1) \\
&\quad + (\mu_1 - \mu_2)^\top \Sigma_2^{-1}(\mu_1 - \mu_2)
\end{align*}

Taking expectation with respect to $P$, we analyze each term separately.

\medskip

\noindent\textit{First term.}
Using the identity $\mathbb{E}[x^\top A x] = \mathrm{tr}(A\,\mathbb{E}[x x^\top])$, applied to the centered variable $(x-\mu_1)$, and recalling that the covariance matrix is defined as
\[
\Sigma_1 = \mathbb{E}\left[(x-\mu_1)(x-\mu_1)^\top\right],
\]
we obtain:
\begin{align*}
\mathbb{E}_P\!\left[(x-\mu_1)^\top \Sigma_2^{-1}(x-\mu_1)\right]
&= \mathrm{tr}\!\left(\Sigma_2^{-1}\,\mathbb{E}_P[(x-\mu_1)(x-\mu_1)^\top]\right) \\
&= \mathrm{tr}\!\left(\Sigma_2^{-1} \Sigma_1\right)
\end{align*}
\medskip

\noindent\textit{Second term.}
Since $\mathbb{E}_P[x] = \mu_1$ and $(\mu_1 - \mu_2)^\top \Sigma_2^{-1}$ is constant with respect to $x$, we obtain:
\begin{align*}
\mathbb{E}_P\!\left[(\mu_1 - \mu_2)^\top \Sigma_2^{-1}(x-\mu_1)\right]
&= (\mu_1 - \mu_2)^\top \Sigma_2^{-1}\,\mathbb{E}_P[x-\mu_1] =0\\
\end{align*}

\medskip

\noindent\textit{Third term.}
Since this term is constant with respect to $x$, we have:
\begin{align*}
\mathbb{E}_P\!\left[(\mu_1 - \mu_2)^\top \Sigma_2^{-1}(\mu_1 - \mu_2)\right]
= (\mu_1 - \mu_2)^\top \Sigma_2^{-1}(\mu_1 - \mu_2)
\end{align*}

\medskip

\noindent Combining the three terms, we obtain:
\begin{align*}
H_2
&= \mathrm{tr}\!\left(\Sigma_2^{-1} \Sigma_1\right)
+ (\mu_1 - \mu_2)^\top \Sigma_2^{-1} (\mu_1 - \mu_2)
\end{align*}

\paragraph{Term $H_3$.} We consider the term

\[
 H_3 = \mathbb{E}_P\!\left[  \frac{1}{2} (x-\mu_1)^\top \Sigma_1^{-1}(x-\mu_1)\right]
\]

Using the identity $\mathbb{E}[x^\top A x] = \mathrm{tr}(A\,\mathbb{E}[x x^\top])$, and recalling that the covariance matrix is defined as
\[
\Sigma_1 = \mathbb{E}\left[(x-\mu_1)(x-\mu_1)^\top\right],
\]
we obtain:

\begin{align*}
H_3
&=\frac{1}{2} \mathrm{tr}\!\left(\Sigma_1^{-1} \Sigma_1\right) \\
&= \frac{1}{2}  \mathrm{tr}(I_k) \\
&= \frac{1}{2} k
\end{align*}

\paragraph{Final expression.}

Substituting $H_1$, $H_2$, and $H_3$ into the expression  (\ref{KL}) for the KL divergence, we obtain:

\begin{align*}
D_{KL}(P \| Q)
&= H_1 + H_2 - H_3 \\[6pt]
&= \frac{1}{2}\log\frac{|\Sigma_2|}{|\Sigma_1|}
+ \frac{1}{2} \left[
\mathrm{tr}(\Sigma_2^{-1} \Sigma_1)
+ (\mu_1 - \mu_2)^\top \Sigma_2^{-1} (\mu_1 - \mu_2)
\right]
- \frac{1}{2}k
\end{align*}

\noindent Rearranging terms we obtein:

\begin{align*}
D_{KL}(P \| Q)
= \frac{1}{2} \left(
\mathrm{tr}(\Sigma_2^{-1} \Sigma_1)
+ (\mu_1 - \mu_2)^\top \Sigma_2^{-1} (\mu_1 - \mu_2)
- k
+ \log\frac{|\Sigma_2|}{|\Sigma_1|}
\right)
\end{align*}

\begin{proposition}
Let $P \sim \mathcal{N}(\mu_1, \Sigma_1)$ and $Q \sim \mathcal{N}(\mu_2, \Sigma_2)$ be two multivariate Gaussian distributions in $\mathbb{R}^k$, where $\mu_1, \mu_2 \in \mathbb{R}^k$ are the mean vectors, and $\Sigma_1, \Sigma_2 \in \mathbb{R}^{k \times k}$ are the corresponding covariance matrices, assumed to be symmetric and positive definite.

\noindent Then, the Kullback--Leibler divergence between $P$ and $Q$ is given by:
\begin{align*}
D_{KL}(P \| Q)
= \frac{1}{2} \left(
\mathrm{tr}(\Sigma_2^{-1} \Sigma_1)
+ (\mu_1 - \mu_2)^\top \Sigma_2^{-1} (\mu_1 - \mu_2)
- k
+ \log\frac{|\Sigma_2|}{|\Sigma_1|}
\right).
\end{align*}
\end{proposition}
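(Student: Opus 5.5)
The plan is to formalize the derivation carried out just before the statement. We start from the expectation form $D_{KL}(P\|Q)=\mathbb{E}_P[\log p(x)-\log q(x)]$. Because $\Sigma_1,\Sigma_2$ are symmetric positive definite, both densities in (\ref{dens_norm}) are well defined and strictly positive on $\mathbb{R}^k$. The support condition in Definition (\ref{def_KL}) therefore holds, and the log-ratio is the quadratic polynomial
\[
\tfrac12\log\tfrac{|\Sigma_2|}{|\Sigma_1|}+\tfrac12(x-\mu_2)^\top\Sigma_2^{-1}(x-\mu_2)-\tfrac12(x-\mu_1)^\top\Sigma_1^{-1}(x-\mu_1).
\]
A Gaussian has finite second moments, so this function is $P$-integrable. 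This justifies splitting the integral by linearity into $H_1+H_2-H_3$ as in (\ref{KL}).

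Next we evaluate each piece using only $\mathbb{E}_P[x]=\mu_1$ and $\mathbb{E}_P[(x-\mu_1)(x-\mu_1)^\top]=\Sigma_1$.

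The term $H_1$ is a constant, so it integrates to itself because $\int p=1$.

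For the quadratic terms we use the trace identity: for $y=x-\mu_1$ and any fixed matrix $A$,
\[
\mathbb{E}[y^\top Ay]=\mathbb{E}[\mathrm{tr}(Ayy^\top)]=\mathrm{tr}(A\,\mathbb{E}[yy^\top]).
\]
This follows from the cyclic property of the trace, which lets us write the scalar $y^\top Ay$ as $\mathrm{tr}(Ayy^\top)$, together with linearity of trace and expectation.

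For $H_2$ we decompose $x-\mu_2=(x-\mu_1)+(\mu_1-\mu_2)$. The resulting cross term has zero mean, because $\Sigma_2^{-1}$ is symmetric and $\mathbb{E}_P[x-\mu_1]=0$. This gives
\[
H_2=\tfrac12\bigl(\mathrm{tr}(\Sigma_2^{-1}\Sigma_1)+(\mu_1-\mu_2)^\top\Sigma_2^{-1}(\mu_1-\mu_2)\bigr).
\]

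For $H_3$ the same identity gives $\tfrac12\mathrm{tr}(I_k)=k/2$.

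Finally we add the three pieces and regroup to obtain the stated formula. Along the way we fix a bookkeeping point: the factor $\tfrac12$ in $H_2$ must be carried consistently, since the text above drops it in the displayed definition of $H_2$ and restores it later.

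The main obstacle is not computational. It is rigor: ensuring that the moments $\mathbb{E}_P[x]$ and $\mathbb{E}_P[xx^\top]$ of $\mathcal{N}(\mu_1,\Sigma_1)$ are indeed $\mu_1$ and $\Sigma_1+\mu_1\mu_1^\top$, and that integrability holds so linearity applies. We take these standard Gaussian facts as given. If one wanted them justified, the route is the substitution $x=\mu_1+\Sigma_1^{1/2}z$ with $z\sim\mathcal{N}(0,I_k)$, whose Jacobian is $|\Sigma_1|^{1/2}$. This reduces everything to the standard normal, where $\mathbb{E}[z]=0$ and $\mathbb{E}[zz^\top]=I_k$ follow from symmetry and the one-dimensional Gaussian integral.
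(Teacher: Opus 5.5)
Your proposal is correct and follows the paper's derivation essentially step for step. It uses the same split into $H_1+H_2-H_3$, the same trace identity $\mathbb{E}[y^\top A y]=\mathrm{tr}(A\,\mathbb{E}[yy^\top])$, and the same decomposition $x-\mu_2=(x-\mu_1)+(\mu_1-\mu_2)$ with a vanishing cross term, while adding rigor the paper leaves implicit (positivity of $q$ for the support condition, $P$-integrability to justify linearity, and that the parameter $\Sigma_1$ really is the covariance of $P$). You are also right that the paper drops the factor $\tfrac12$ in its displayed definition of $H_2$ and restores it only in the final expression.
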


\noindent \begin{proof}
See the derivation above.
\end{proof}

\paragraph{Special case (VAE).} This particular case is of special interest in the context of Variational Autoencoders (VAEs), where the latent variables are typically modeled using a Gaussian approximate posterior and a standard normal prior. Under these assumptions, the KL divergence admits a closed-form expression, making it computationally efficient to evaluate and differentiate during training. This property is crucial for enabling scalable optimization via gradient-based methods and for enforcing a well-structured latent space.

\noindent In this setting, the approximate posterior is given by
\[
q(z|x) \sim \mathcal{N}(\mu(x), \Sigma(x)),
\]
while the prior is defined as
\[
p(z) \sim \mathcal{N}(0, I_k).
\]

\noindent Therefore, the KL divergence simplifies to:
\begin{align*}
D_{KL}(q(z|x) \| p(z))
= \frac{1}{2} \left(
\mathrm{tr}(\Sigma(x)) 
+ \mu(x)^\top \mu(x) 
- k 
- \log |\Sigma(x)|
\right)
\end{align*}

\noindent This expression is commonly used as the regularization term in the VAE objective function.

\section{Algebraic Identities}

In this appendix, we summarize several algebraic identities used throughout the derivation.

\paragraph{Trace properties.}

Let $A, B$ be matrices of compatible dimensions. Then:

\begin{itemize}
\item $\mathrm{tr}(A) = \mathrm{tr}(A^\top)$
\item $\mathrm{tr}(AB) = \mathrm{tr}(BA)$
\item $\mathrm{tr}(ABC) = \mathrm{tr}(BCA) = \mathrm{tr}(CAB)$
\item $\mathrm{tr}(aA + bB) = a\,\mathrm{tr}(A) + b\,\mathrm{tr}(B)$ for scalars $a,b$
\end{itemize}

\paragraph{Quadratic forms.}

Let $x \in \mathbb{R}^k$ and $A \in \mathbb{R}^{k \times k}$. Then:

\begin{itemize}
\item $x^\top A x = \mathrm{tr}(x^\top A x) = \mathrm{tr}(A x x^\top)$
\end{itemize}

\paragraph{Expectation and trace.}

For a random vector $x \in \mathbb{R}^k$:

\begin{itemize}
\item $\mathbb{E}[\mathrm{tr}(X)] = \mathrm{tr}(\mathbb{E}[X])$
\item $\mathbb{E}[x^\top A x] = \mathrm{tr}(A\,\mathbb{E}[x x^\top])$
\end{itemize}

\paragraph{Covariance matrix.}

For a random vector $x$ with mean $\mu$:

\[
\Sigma = \mathbb{E}\big[(x - \mu)(x - \mu)^\top\big]
\]

\paragraph{Gaussian expectations.}

If $x \sim \mathcal{N}(\mu, \Sigma)$, then:

\begin{itemize}
\item $\mathbb{E}[x] = \mu$
\item $\mathbb{E}[(x - \mu)(x - \mu)^\top] = \Sigma$
\item $\mathbb{E}[x x^\top] = \Sigma + \mu \mu^\top$
\end{itemize}

\bibliographystyle{apalike}
\bibliography{biblio}

\end{document}